
\documentclass[10pt,twocolumn,letterpaper]{article}

\usepackage{cvpr}      
\definecolor{cvprblue}{rgb}{0.21,0.49,0.74}
\usepackage[pagebackref,breaklinks,colorlinks,allcolors=cvprblue]{hyperref}

\usepackage[capitalize]{cleveref}
\crefname{section}{Sec.}{Secs.}
\Crefname{section}{Section}{Sections}
\Crefname{table}{Table}{Tables}
\crefname{table}{Tab.}{Tabs.}
\Crefname{figure}{Figure}{Figures}
\crefname{figure}{Fig.}{Figs.}

\usepackage{pifont}

\newcommand{\cmark}{\ding{51}}
\newcommand{\xmark}{\ding{55}}

\usepackage{multirow}

\usepackage{amsmath, amsthm}

\newtheorem{assumption}{Assumption}
\newtheorem{definition}{Definition}
\newtheorem{theorem}{Theorem}


\title{The Collapse of Patches}

\author{
    Wei Guo,
    Shunqi Mao,
    Zhuonan Liang,
    Heng Wang,
    Weidong Cai\\
    School of Computer Science, The University of Sydney\\
    {\tt \small \{wei.guo, shunqi.mao, zhuonan.liang, heng.wang, tom.cai\}@sydney.edu.au}
}

\begin{document}
\maketitle

\begin{abstract}
Observing certain patches in an image reduces the uncertainty of others. Their realization lowers the distribution entropy of each remaining patch feature, analogous to collapsing a particle's wave function in quantum mechanics. This phenomenon can intuitively be called patch collapse. To identify which patches are most relied on during a target region's collapse, we learn an autoencoder that softly selects a subset of patches to reconstruct each target patch. Graphing these learned dependencies for each patch's PageRank score reveals the optimal patch order to realize an image. We show that respecting this order benefits various masked image modeling methods. First, autoregressive image generation can be boosted by retraining the state-of-the-art model MAR. Next, we introduce a new setup for image classification by exposing Vision Transformers only to high-rank patches in the collapse order. Seeing 22\% of such patches is sufficient to achieve high accuracy. With these experiments, we propose patch collapse as a novel image modeling perspective that promotes vision efficiency. Our project is available at \href{https://github.com/wguo-ai/CoP}{https://github.com/wguo-ai/CoP}.
\end{abstract}
\section{Introduction}
\label{sec:intro}

Images are more than collections of independent pixels or patches: they are structures of mutual dependence \cite{ImageStats}. Observing parts of an image often reveals information about others. This observation (realization) process of an image, when modeled along a patch sequence, is then analogous to the \textbf{collapse} of a wave function in quantum mechanics \cite{feynman}: once a particular patch is measured, the remaining unrealized patches' uncertainty reduces around the observed evidence. Intuitively, we refer to this image uncertainty reduction process as the \textbf{collapse of patches}.

Different patches collapse the uncertainty of the residual image with different effectiveness, which is a nontrivial phenomenon. Seeing the beak of a rooster first, for instance, constrains the leftover image information more than seeing a background field, as shown by the comparison of uncertainty reduction effects from an autoregressive (AR) image generator \cite{mar} following different patch synthesis orders in \cref{fig:teaser}. In reality, a human painter also starts with a sketch of their subject's important parts in order to neatly constrain the underlying visual uncertainty \cite{draw}. Aside from image synthesis, the human ability to correlate partial visual contents differently in a scene is also essential for completing vision tasks with high efficiency \cite{context}.

Most modern vision models, however, treat image patches as uniformly correlated samples in masked image modeling (MIM), \textit{e.g.}, for stochastic AR generation \cite{PixelRNN, PixelCNN, VQ-GAN, MaskGIT, mar, VAR, xAR, HMAR} or masked classification \cite{MAE, SimMIM, MixMAE, CAPI, MIMIR, CMAE, SMAE}. This assumption ignores the contributive differences among patches during collapse.

In this work, we formalize the problem of patch collapse respecting its patch-wise priorities. We assume that when certain patches of an image are observed, the feature distributions of the unobserved patches shift from broad, high-entropy shapes to concentrated, low-entropy states. The further assumption is that an image's patches can be ranked based on their elicited shift strengths during this process. To study how this collapse unfolds, we train a \textbf{Collapse Masked Autoencoder (CoMAE)} whose encoder selectively masks image patches with noise injection, conditioning the decoder in reconstructing a target patch. This encoder-predicted mask is a continuous vector weighting each patch's collapse contribution between 0 and 1. Our experiments validate the assumptions above: only a subset of patches are most responsible for each given patch's collapse, as polarized selection weights emerge to optimize reconstructions. Furthermore, we observe that this selection set varies across patches, suggesting that each patch has a distinct collapse dependency and contributes to the global image certainty with different effectiveness.

To analyze these patch dependencies at the image level, we map out a directed acyclic graph of patches with the CoMAE selection masks as edge weights. Applying PageRank \cite{PageRank} to this graph yields an ordering of patches by their collapse independence, defining an optimal uncertainty reduction sequence in which an image realizes itself. We term these sequences \textbf{collapse orders}. Our visualizations show that high-rank patches in the collapse order outlines important shapes in an image. Additionally, we observe that the inter-class collapse orders across ImageNet \cite{ImageNet} samples exhibit moderate similarity, which suggests their depiction of a consistent underlying structure behind different images.

We demonstrate that collapse order offers beneficial supervision to MIM methods. When integrated into an AR image generator MAR \cite{mar}, respecting the collapse order improves sample quality both quantitatively and qualitatively over the original model. Alongside generation, respecting the collapse order also leads to efficient image classification: by training on patches with high collapse priorities, a Vision Transformer (ViT) \cite{ViT} can maintain high accuracy while processing just $22\%$ of the image content. In contrast, conventional full-image classifiers sacrifice computation to redundant patches that contribute little discriminative value.

In summary, our work has the following contributions:

\begin{enumerate}
    \item We introduce and formalize the problem of patch collapse, which offers a novel perspective to describe image structures that are fundamental to vision modeling.
    \item We present CoMAE, an effective method to pinpoint the image-level order of patch collapse.
    \item By supervising with the collapse order, we improve MIM methods' performance in AR image generation and masked image classification. These experiments show the generalizable applicability of patch collapse modeling to different vision tasks.
\end{enumerate}

\section{Related Works}
\label{sec:related}

\paragraph{Stochastic Masked Image Modeling.}
Inspired by masked language modeling \cite{BERT}, masked image modeling (MIM) predicts corrupted local units from an image to learn generalizable visual features. Stochastic MIM (SMIM) methods reconstruct randomly masked image portions to obtain self-supervised visual features, assuming uniform patch correlations. The pioneering denoising autoencoder \cite{denoise-ae} explores robust feature learning through partial latent pattern corruption. Later, Context Encoder \cite{con-encoder} employs convolutional neural networks to inpaint stochastic image regions for representation learning. Masked Autoencoders (MAE) \cite{MAE} apply high random masking ratios (e.g., $75\%$) in an asymmetric transformer for scalable learning. SimMIM \cite{SimMIM} simplifies this process with larger patches and RGB regression. Painter \cite{Painter} expands MIM to various image-to-image mapping tasks. MixMAE \cite{MixMAE} incorporates image mixing alongside MIM for data augmentation. VideoMAE \cite{VideoMAE} extends MIM to consider spatio-temporal masking for videos, while OmniMAE \cite{OmniMAE} explores masked modeling with multimodal data. Recently, CAPI \cite{CAPI} enhances SMIM features by predicting missing patches w.r.t. an unmasked teacher. MIMIR \cite{MIMIR} improves SMIM's adversarial robustness via mutual information-based reconstruction. These SMIM methods learn generalizable visual features but ignore the variance of inter-patch dependencies, leading to inefficient representations. In contrast, our CoMAE adaptively masks trivial patches to model patch dependencies accurately, deriving stronger guidance for MIM methods.

\paragraph{Adaptive Masked Image Modeling.}
Adaptive MIM (AMIM) methods adjust image-wise masking during training to effectively target informative regions. CMAE \cite{CMAE} unifies contrastive learning with MIM for stronger representation disambiguation. SiamMAE \cite{SMAE} leverages asymmetric masking for video correspondences. AttMask \cite{AttMask} generates attention-based masks from a teacher model for AMIM. AdaMAE \cite{AdaMAE} masks visible video tokens adaptively with an additional sampling network. SemMAE \cite{SemMAE} uses semantic masks from a ViT to mask an image's dominant shapes. CL-MAE \cite{CLMAE} employs curriculum learning to adapt MIM to harder masks that hinder reconstruction. The recent RAM++ \cite{RAM++} uses adaptive masks for blind image restoration, while Self-Guided MAE \cite{SelfGuidedMAE} introduces self-guided informed masking based on early-stage patch clustering in MIM. Although AMIM methods efficiently adapt to data salience, they do not explicitly model image uncertainty reduction across patches. Our CoMAE formalizes patch collapse, identifying the global patch orders for image realization. CoMAE provides a unique perspective of image structures absent in prior methods, implemented with AMIM but not limited to its coverage in vision modeling.

\paragraph{Autoregressive Image Generation.}
Autoregressive (AR) models generate images sequentially with localized representations, often employing masking for unified learning. Classic AR methods, \textit{i.e.}, PixelRNN \cite{PixelRNN} and PixelCNN \cite{PixelCNN}, follow fixed raster orders during generation. VQ-GAN \cite{VQ-GAN} generates quantized image tokens in the same order.  MaskGIT \cite{MaskGIT} predicts quantized tokens with a scheduled stochastic order. MAGE \cite{MAGE} learns discrete image tokens and their generation in a unified manner. MAR \cite{mar} follows the same order but replace MaskGIT's discrete tokens with continuous latents. VAR \cite{VAR} employs next-scale prediction for AR. MAGVIT \cite{MAGVIT} extends AR generation to videos. Recently, HMAR \cite{HMAR} improves AR efficiency with hierarchical multi-step prediction. xAR \cite{xAR} generalizes next-token generation to flexible units such as cell or scales (next-x predictio). While different random or fixed orders are explored by these methods, our collapse order offers a data-salient strategy in modeling the generation priorities of AR units, yielding significant gains without drastic remodeling and retraining. Our strategy can also be integrated with various next-x methods straightforwardly.

\paragraph{Efficient Vision Transformers via Token Pruning.}
Token pruning methods enhance the computational efficiency of Vision Transformers (ViTs) by selectively removing redundant tokens. DynamicViT \cite{DynamicViT} estimates token importance scores to hierarchically prune trivial tokens. Similarly, ATS \cite{ATS} and A-ViT \cite{A-ViT} prune tokens with attention scores. AdaViT \cite{AdaViT} learns a decision policy model to drop various inference units. EViT \cite{EViT} fuse attention-pruned tokens into a single representation to retain more information. SPViT \cite{SPViT} employs an attention-based selector for this fusion. For semantic segmentation, DToP \cite{DToP} dynamically prunes easy tokens via early exiting based on confidence thresholds. While these methods investigates feature pruning across the model architecture, our collapse order directly operates on explicit image patches. We show that vision efficiency can be significantly boosted by pruning on the model-agnostic image space alone. This decoupling of data and model also suggests that our method can readily combine with token pruning techniques.

\begin{figure*}
    \centering
    \includegraphics[width=\linewidth]{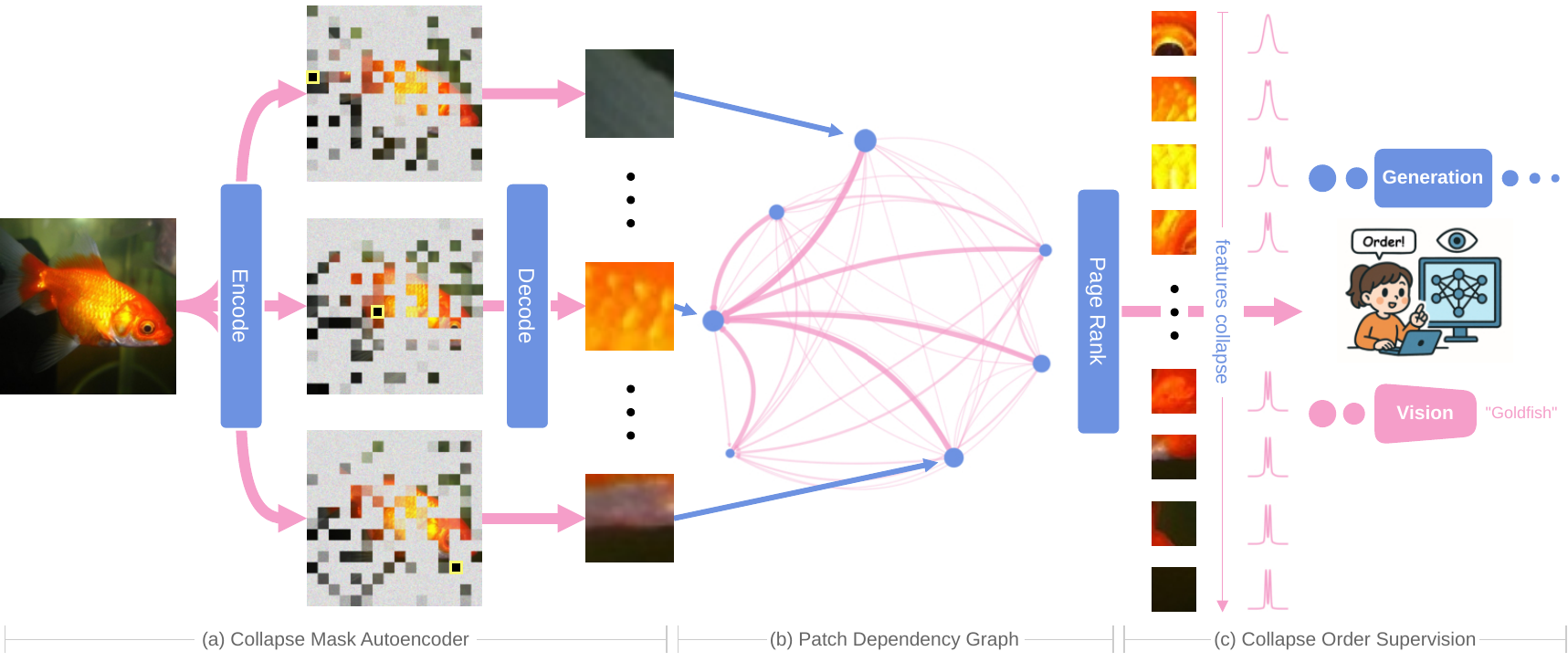}
    \caption{\textbf{Pipeline overview.} Given an image, the CoMAE encoder selects the most influential patches needed to reconstruct each patch, while trivial patches are masked with heavier noise injection. These selection weights form a patch dependency graph on which we compute the PageRank scores to determine the collapse order of patches, where higher-rank patches are less dependent on the rest of the image. Finally, we use this ranking to supervise image generation and classification tasks to follow the correct patch processing order.}
    \label{fig:pipe}
\end{figure*}

\section{Method}
\label{sec:method}

For modeling efficiency, we first pass an image through a variational autoencoder (VAE) \cite{ae-bayes, ae-gen, VAE} to represent $N$ patches with a set of embeddings as $\left\{\mathbf{e}_n\right\}^N$. Each patch's feature distribution is conditioned on other patches, which can be expressed by a probability distribution $P\left(\mathbf{e}_{n}\vert \left\{ \mathbf{e}_i \right\}_{i \ne n}^{N}\right)$. Our problem can then be formulated as finding a ranking function $R\left(\mathbf{e}_n\right)$ such that:

\begin{equation}
\label{eq:entropy-model}
    H_c = \sum_{i=1}^{N}H\left[P\left(\mathbf{e}_n\vert\left\{\mathbf{e}_i;R\left(\mathbf{e}_i\right)>R\left(\mathbf{e}_n\right)\right\}^N_{i\ne n}\right)\right],
\end{equation}

\noindent where $H$ measures the distribution entropy, is minimized.

To model $R$, we first learn each patch's dependencies on other patches with an autoencoder's encoded masking weights in \cref{subsec:comae} through reconstruction. A directed acyclic graph of patches can then be drawn in \cref{subsec:patch-rank}, where the edge weights are patch dependencies. Computing the PageRank \cite{PageRank} scores of this graph yields the output of $R$, which constitutes our collapse order as the optimal patch realization sequence to reduce image uncertainty.

With the collapse order identified, we propose to improve the existing stochastic AR image generator MAR \cite{mar} by respecting this image structure in \cref{subsec:cmar}. To accomplish this, we train an MAR model with collapse order guidance. Finally, we investigate our collapse modeling's effectiveness on image classification by training a Vision Transformer (ViT) \cite{ViT} with collapse-order masks in \cref{subsec:cvit}.

\subsection{Collapse Masked Autoencoder}
\label{subsec:comae}
Following the motivation that some patches are more responsible for a specific patch's collapse than others, we assume masking $K$ patches from $\left\{ \mathbf{e}_i \right\}_{i \ne n}^{N}$ leaves $P$'s shape approximately unchanged. To find these influential patches, we train a Collapse Masked Autoencoder (CoMAE) model as shown in \cref{fig:pipe} (a). During reconstruction, the encoder $f$ follows ViT \cite{ViT} to pool visual information from $\left\{\mathbf{e}_i\right\}^N_{i\ne n}$ with self-attention blocks. It predicts a soft weight vector $\mathbf{w} \in [0,1]^n$ for patch selection as:

\begin{equation}
    \mathbf{w}_n=f\left(\left\{\mathbf{e}_i\right\}^N_{i\ne n};\mathbf{q}_n\right),
\end{equation}

\noindent where $\mathbf{q}_n$ is a learned positional embedding to inform the encoder of the under-reconstruction patch location. We then mask each patch embedding by noise injection w.r.t. $\textbf{w}_n$ as:

\begin{equation}
    \mathbf{e}_i^m=\alpha_i\mathbf{e}_i+\left(1-\alpha_i\right)\mathcal{N}\left(0,\mathbf{I}\right), \ i\ne n,  
\end{equation}

\noindent where $\alpha$ is an exponential decay interpolant with a hyperparameter $\sigma$ for controlled steepness:

\begin{equation}
    \alpha_i = \exp\left(-\frac{\left(1-\mathbf{w}_n^i\right)^2}{2\sigma^2}\right).
\end{equation}

The decoder $g$ is another stack of ViT-like self-attention blocks that pools information from the selected patches to reconstruct the target patch $\mathbf{e}^\ast_n$:

\begin{equation}
    \mathbf{e}^\ast_{n}=g\left(\left\{\mathbf{e}^m_i\right\}_{i\ne n}^N;\mathbf{q}_n\right).
\end{equation}

\noindent The patch reconstruction loss is simply $\mathcal{L}_r = \left\Vert \mathbf{e}_n - \mathbf{e}_n^\ast \right\Vert_1$. We alternatively optimize $f$ and $g$ with $\mathcal{L}_r$ in a batch-wise manner during training. Please refer to \cref{sec:arch} for additional architecture details of CoMAE.

\paragraph{Polarization.}
To test our assumptions in \cref{sec:intro} that there exists a priority ranking of patches in patch collapse, we observe the training metrics of CoMAE in \cref{subsec:comae-props}. It can be seen that $\mathbf{w}$ polarizes to 0 and 1 as the reconstruction loss $\mathcal{L}_r$ converges to minima instead of staying uniform. This effect confirms that different patches contribute to an image's uncertainty reduction with different effectiveness, since only a subset of patches significantly contribute to each target patch's feature collapse.

\paragraph{Contrastive Regularization.}
Do different patches rely on different subset selections for their feature collapse? To answer this further question, we inquisitively add a contrastive objective to encourage the diversity of $\mathbf{w}$ as:

\begin{equation}
    \mathcal{L}_c=\frac{1}{N}\sum_{i=1}^{N}-\log\frac{\exp\left(\text{sim} \left(\mathbf{w}_i, \mathbf{w}_i\right)/\tau\right)}{\sum_{j=1}^{N}\exp\left(\text{sim} \left(\mathbf{w}_i, \mathbf{w}_j\right) /\tau\right)},
\end{equation}

\noindent where $\tau$ is a learnable temperature and $\text{sim}\left(\cdot,\cdot\right)$ measures the cosine similarity between two vectors. We then define the total loss to be $\mathcal{L}=\mathcal{L}_r+0.01\mathcal{L}_c$ and retrain CoMAE.

This ablation, detailed in \cref{subsec:comae-props}, shows that CoMAE plateaus at a significantly higher $\mathcal{L}_r$ without $\mathcal{L}_c$, assigning similar $\mathbf{w}$ to all patches. With $\mathcal{L}_c$, $\mathcal{L}_r$ is significantly reduced by diverse masks across patches and escapes local minima. Thus, it's reasonable to assume that the one-to-many dependency of each patch during collapse is diverse.

\subsection{Patch Ranking}
\label{subsec:patch-rank}
$N$ instances of $\mathbf{w}$ can be encoded for all patches in an image. Together they form a patch dependency graph with a $N\times N$ adjacency matrix $\mathbf{A}$ where $\mathbf{A}_{ij}=\mathbf{w}_{ij}$. To rank the independency of patches, we compute their PageRank scores from $\mathbf{A}$. A patch with high PageRank score has more influence on other patches. Please see \cref{sec:proof} for a formal proof linking this ranking mechanism to the objective optimal dependency ranking $R$.

\begin{figure}
    \centering
    \includegraphics[width=\linewidth]{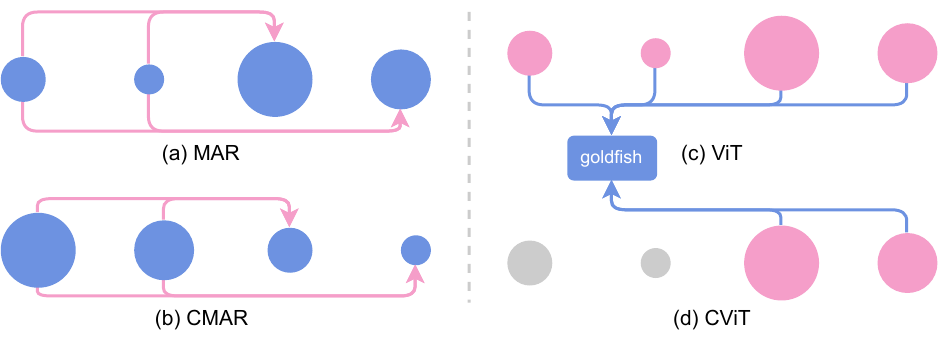}
    \caption{\textbf{Comparison of generators and classifiers.} Our generator (CMAR) and classifier (CViT) respect the collapse order.}
    \label{fig:tasks}
\end{figure}

\subsection{Collapsing Autoregressive Image Generation}
\label{subsec:cmar}

As shown in \cref{fig:tasks} (a), autoregressive (AR) image generators often follow random orders to generate patches sequentially. This stochastic arrangement assumes that patch dependencies follow a uniform distribution in an image, inducing inefficiency in image uncertainty reduction. We apply our patch sequencing learned from CoMAE as extra supervision to retrain a stochastic AR model, MAR \cite{mar}, which we call \textbf{Collapsed Mask Autoregressive Model (CMAR)} in \cref{fig:tasks} (b). CMAR learns to generate patches following the collapse order instead of stochastic orders.

For each training sample, we first pass it through the CoMAE encoder and obtain its patch ranks. We then mask a random amount of low-rank patches and learn to generate them from high-rank ones. Since this sampling order of patches is deterministic, CMAR is more likely to overfit than the original model. To compensate for this effect, we replace $10\%$ sampling sequences with random ranks as a form of regularization. We also keep the random image flip data augmentation from MAR.

\subsection{Collapsing Image Classification}
\label{subsec:cvit}

Conventionally, an image classifier has access to the entire input image as in \cref{fig:tasks} (c). Since we show earlier that the realization of an image follows collapse order, it's intuitive to ask if such classifiers can maintain accurate when only high-rank (\textit{i.e.} highly independent) patches in the collapse order are present.

Correspondingly, we train a ViT classifier on ImageNet \cite{ImageNet} under two settings: full-image and masked. We mask $0\sim99\%$ low-rank patches with a cosine annealing schedule favoring lower mask rates, which results in a \textbf{Collapsed Vision Transformer (CViT)} depicted in \cref{fig:tasks} (d). Instead of replacing the masked patches with mask tokens, CViT directly drops them from the input sequence for efficiency.

\begin{figure*}
    \centering
    \begin{minipage}{0.49\textwidth}
        \centering
        \includegraphics[width=\textwidth]{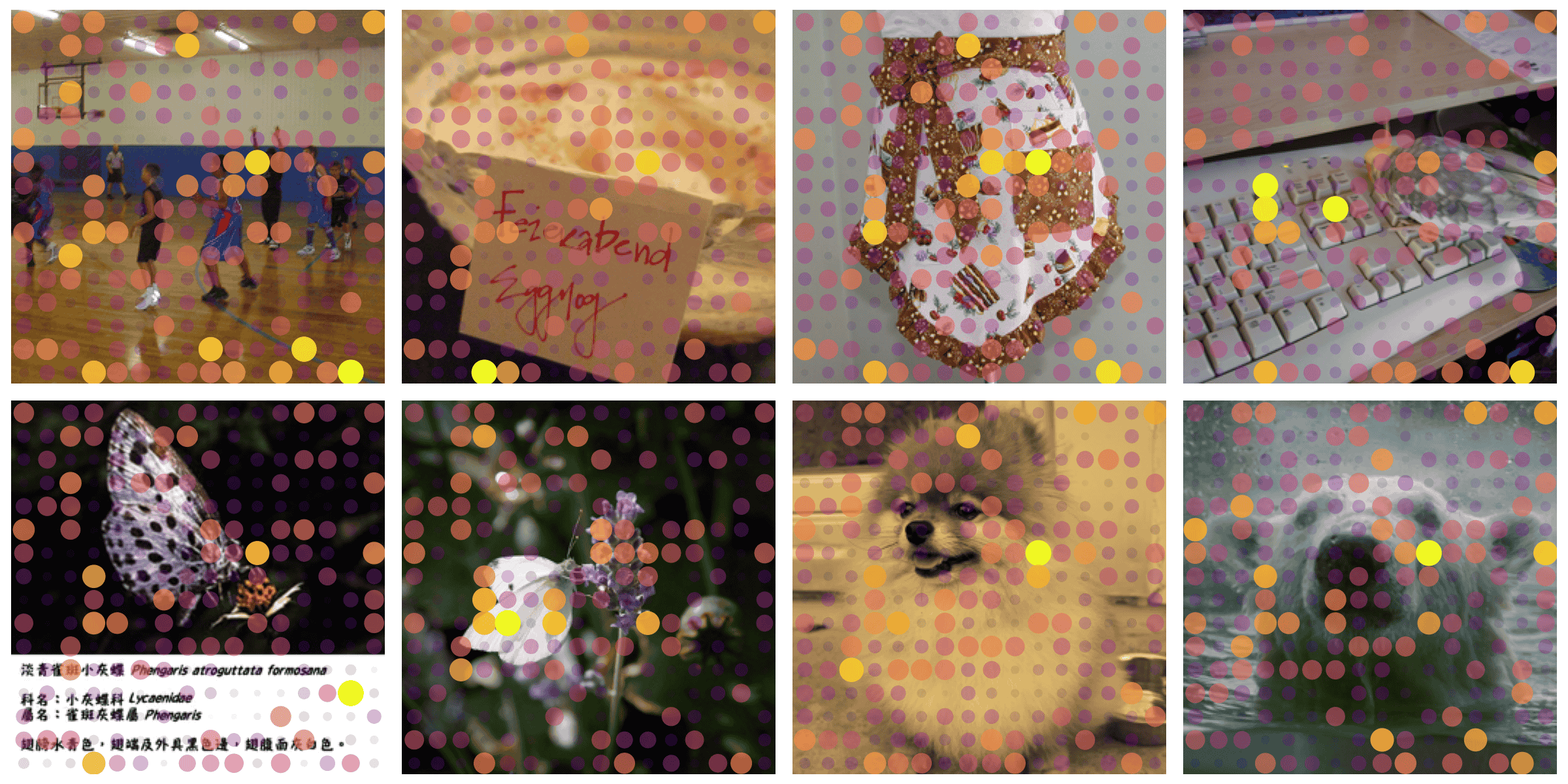}
    \end{minipage}
    \begin{minipage}{0.49\textwidth}
        \centering
        \includegraphics[width=\textwidth]{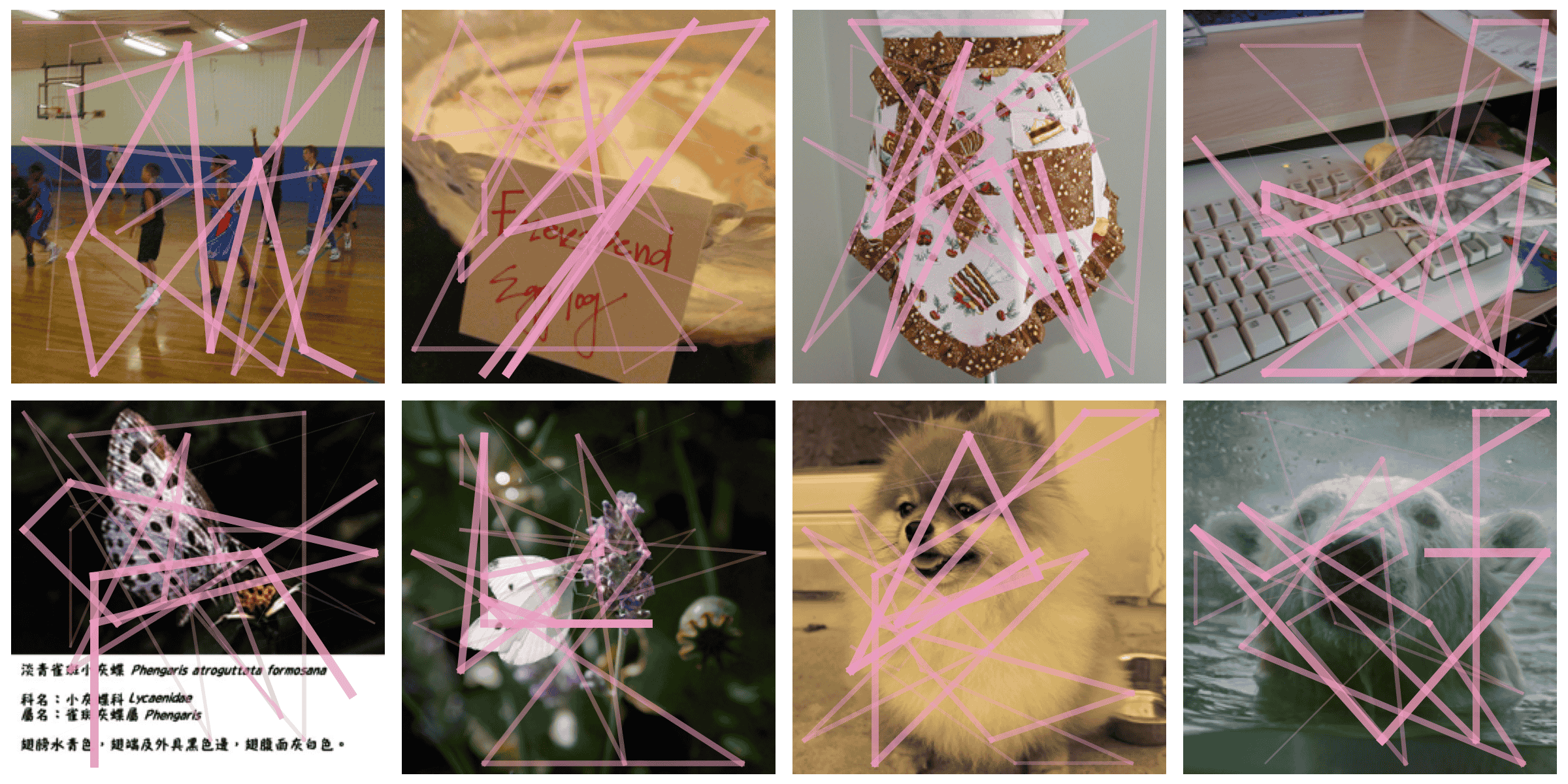}
    \end{minipage}
    \caption{\textbf{Visualization of collapse order.} The left figure shows image patches with different collapse ranks indicated by circle sizes. The right figure connects the top-ranked 64 patches by collapse order. One can observe that top patches outline important shapes in each image.}
    \label{fig:comae-qual}
\end{figure*}

\begin{figure}
    \centering
    \includegraphics[width=\linewidth]{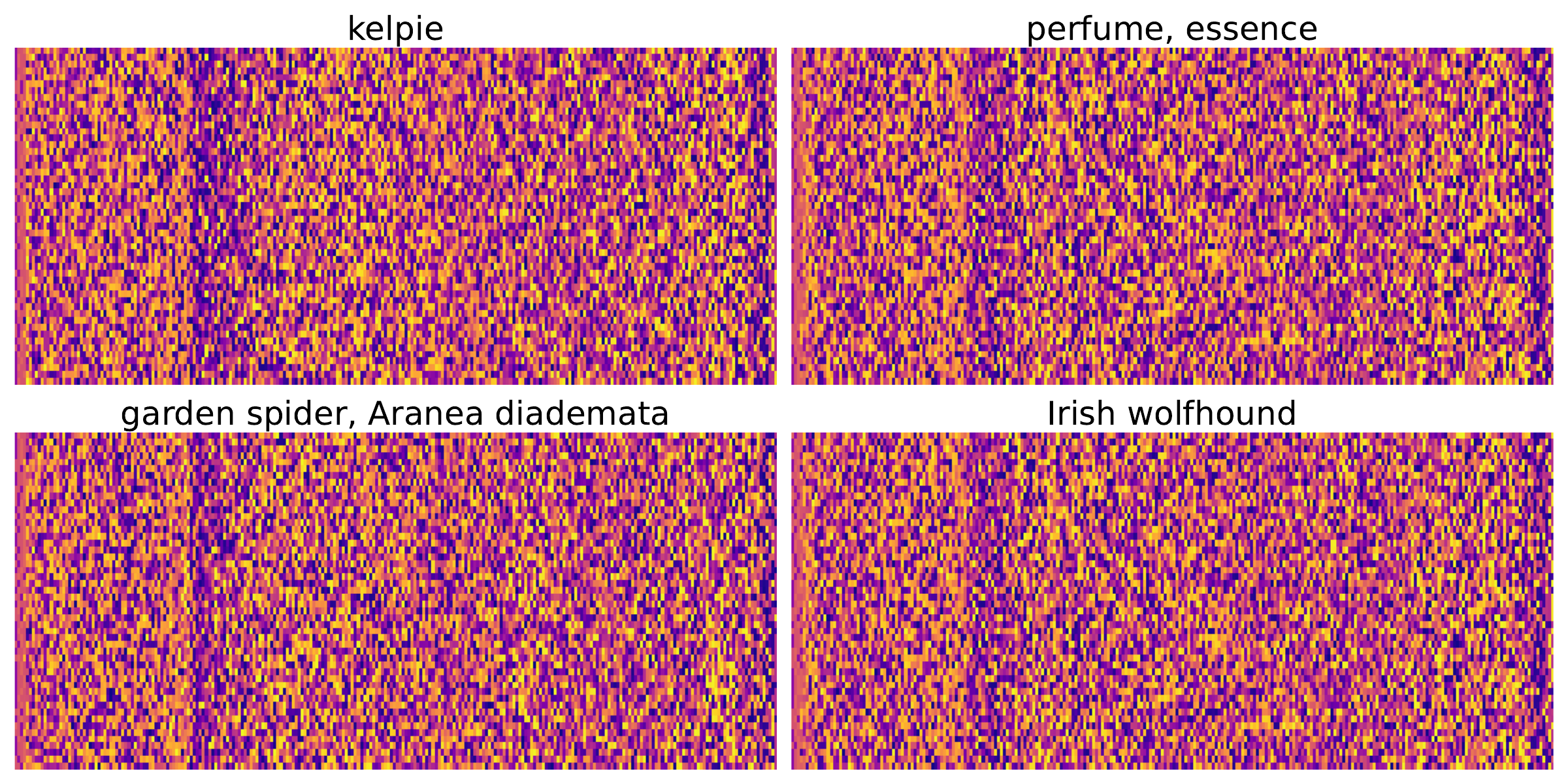}
    \caption{\textbf{Class-wise collapse order patterns.} These heatmaps show sample patch indices sorted in collapse order for each class.}
    \label{fig:class-order}
\end{figure}

\begin{figure}
    \centering
    \includegraphics[width=\linewidth]{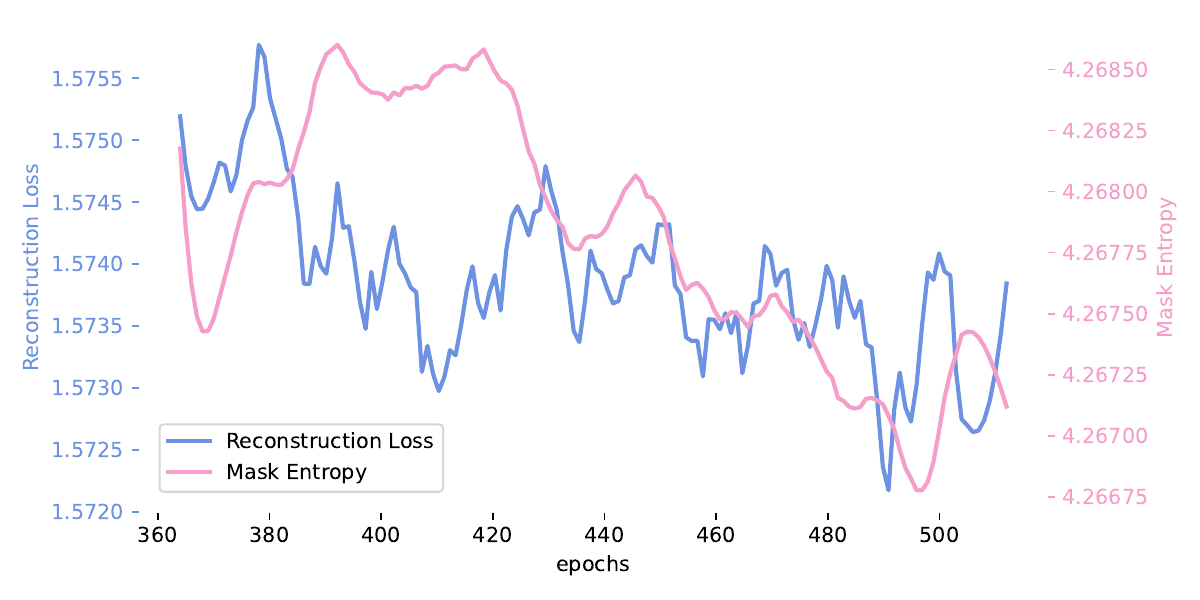}
    \caption{\textbf{Training of CoMAE (last 140 epochs).} Mask entropy drops together with reconstruction loss.}
    \label{fig:comae-train}
\end{figure}

\begin{table}
    \centering
    \begin{tabular}{c|c|c}
        \toprule
        Contrastive & Mask Entropy & Reconstruction Loss \\
        \midrule
        \xmark & 4.816 & 8.392\\
        \cmark & \textbf{4.267} & \textbf{1.567}\\
        \bottomrule
    \end{tabular}
    \caption{\textbf{Ablation of CoMAE's contrastive regularization.} Contrastive guidance aids reconstruction and mask polarization.}
    \label{tab:comae-contrast}
\end{table}

\section{Experiments and Results}
\label{sec:experiment}

We conduct all our experiments on center-cropped $256\times256$ images from ImageNet-1k \cite{ImageNet}. These images are first processed by a KL-16 VAE \cite{LDM} used by MAR \cite{mar} into 256 16-dim tokens. We use a RTX5090 GPU for all trainings.

\paragraph{Implementation of CoMAE.}
Both the encoder and decoder of CoMAE have 12 attention blocks following ViT \cite{ViT}, with embedding dimensions 64 and 256 respectively. A four-layer residual MLP is appended to the decoder to output the final 16-dim target patch token. The encoder and decoder are alternatively optimized in a batch-wise manner. We train CoMAE for 512 epochs with a cosine annealing schedule decaying learning rate from 1e-4 to 0.

\paragraph{Implementation of CMAR.}
Training MAR from scratch is computationally daunting for our resources. Instead, we treat the pretrained MAR as an order-agnostic prior and fine-tune it for 24,000 steps (batch size 32) on the same ImageNet data with our collapse order. Due to restricted computational resources, we only experiment on the MAR-B variant. We linearly warmp up the learning rate to 1e-7 during the first $10\%$ steps and then decay it to 0 with cosine annealing. The model weights are updated with a per-step estimated mean average (EMA) of rate 0.99999. During inference, we set CMAR's classifier-free guidance (CFG) \cite{CFG} scale to 3.0 from our ablations in \cref{tab:cmar-cfg}. We keep the original MAR at CFG 2.9 for its optimal performance.

\paragraph{Implementation of CViT.}
We fine-tune the ImageNet-21k pretrained ViT-Base \cite{ViT} model on ImageNet-1k for classification of 1000 image classes. The training expands 3 epochs with a cosine annealing schedule decaying learning rate from 1e-4 to 0. The model weights are updated by per-step EMA with rate 0.9999. We apply a random horizontal flip of training images for data augmentation.

\subsection{Properties of CoMAE}
\label{subsec:comae-props}

\begin{figure*}
    \centering
    \includegraphics[width=\linewidth]{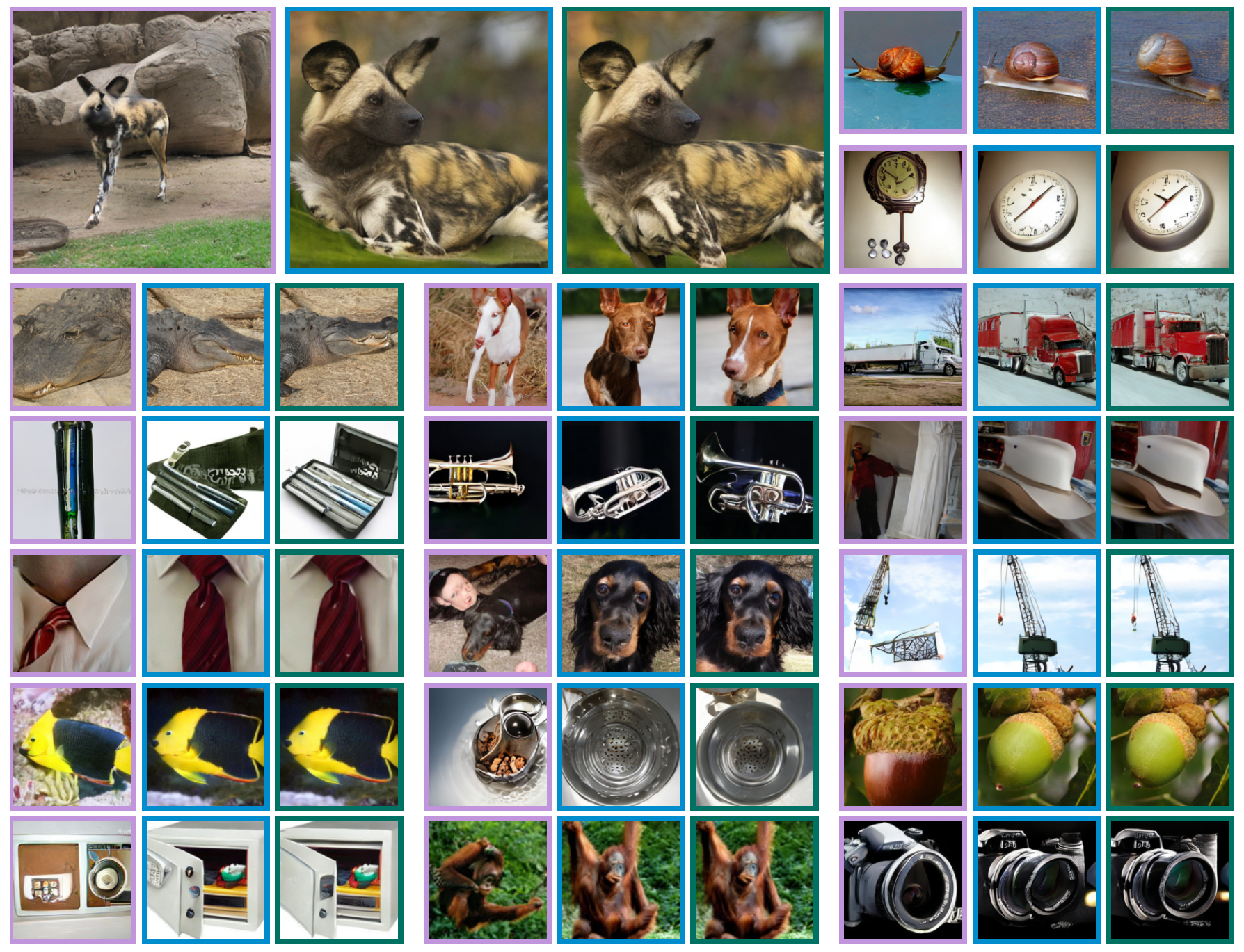}
    \caption{\textbf{Qualitative comparison of ARs.} MAR is framed in purple. Our MAR+C and CMAR results are in blue and green respectively.}
    \label{fig:cmar-qual}
\end{figure*}

\begin{table*}
    \centering
    \begin{tabular}{l|l|l|l|l|l|l|l|l|l|l}
         \toprule
         \multirow{2}{*}{Method} & \multicolumn{5}{c|}{w/o CFG} & \multicolumn{5}{c}{w/ CFG}\\
         & FID$\downarrow$ & tFID$\downarrow$ & IS$\uparrow$ & Pre.$\uparrow$ & Rec.$\uparrow$ & FID$\downarrow$ & tFID$\downarrow$ & IS$\uparrow$ & Pre.$\uparrow$ & Rec.$\uparrow$\\
         \midrule
         MAR & \textbf{7.114} & \textbf{3.498} & \textbf{194.50} & 0.784 & 0.571 & 5.997 & 2.330 & 281.48 & 0.822 & 0.571\\
         \midrule
         MAR+C (Ours) & 7.173 & 3.563 & 193.80 & \textbf{0.788} & 0.568 & 5.956 & 2.321 & \textbf{284.78} & \textbf{0.826} & 0.566\\
         CMAR (Ours) & 7.213 & 3.600 & 190.92 & 0.781 & \textbf{0.572} & \textbf{5.928} & \textbf{2.238} & 280.55 & 0.818 & \textbf{0.576}\\
         \bottomrule
    \end{tabular}
    \caption{\textbf{Generation performance of different AR methods.} The first place is \textbf{bolded}. MAR+C denotes the unfine-tuned MAR results following our collapse order. CMAR tests MAR fine-tuned with collapse order.}
    \label{tab:ar-comp}
\end{table*}

We provide experiments to corroborate our statements of CoMAE's behaviors. First, we show that CoMAE emerges polarized selections instead of uniformly distributing the weights in during reconstruction optimization. To quantify polarization, we define a Mask Entropy metric as:

\begin{equation}
    H_\text{mask}=-\frac{1}{M}\sum^M_{i=1}\sum^N_{j=1}\mathbf{w}^i_j\log\mathbf{w}^i_j,
\end{equation}

\noindent where $M$ is the number of samples and $N$ is the number of patches. A lower $H_\text{mask}$ reflects more polarized mask distribution in $\mathbf{w}$. As shown in \cref{fig:comae-train}, $H_\text{mask}$ grows smaller as reconstruction loss optimizes, indicating that only a subset of patches is responsible for a target patch's collapse.

Next, we observe that different target patches depend on different patch subsets for their collapse. Our contrastive ablations in \cref{tab:comae-contrast} reveal that this diverse selection further polarizes masks and significantly optimizes reconstruction. Additionally, we provide a visualization of the collapse orders illustrated in \cref{fig:comae-qual}. The patches outlining major objects have higher ranks in each image, which aligns intuitively with human's visual scanning order \cite{scan}. This emergent similarity between image synthesis and recognition is particularly intriguing, as it might suggest a convergence of optimal scanning order between these opposite tasks.

Finally, we visualize class-wise collapse order patterns in \cref{fig:class-order}. One can see that similar collapse orders emerge among instances from the same class, since there are multiple closely aligned patch indices indicated by the vertical heat lines. The inter-class collapse order patterns are also similar judged by the locations of their heat lines. Therefore, it's reasonable to assume that the identified collapse order exhibit moderate consistency over classes and image instances, which suggests the existence of a common structure behind different images' realization.

\begin{figure*}
    \centering
    \begin{minipage}{0.49\textwidth}
        \centering
        \begin{tabular}{l|l|l|l|l|l}
             \toprule
             CFG & FID$\downarrow$ & tFID$\downarrow$ & IS$\uparrow$ & Pre.$\uparrow$ & Rec.$\uparrow$ \\
             \midrule
             2.9 & \textbf{5.913} & \textbf{2.219} & 276.375 & 0.816 & \textbf{0.582}\\
             3.0 & 5.928 & 2.238 & 280.55 & 0.818 & 0.576\\
             3.1 & 5.933 & 2.240 & \textbf{284.50} & \textbf{0.819} & 0.572\\
             \bottomrule
        \end{tabular}
        \captionsetup{type=table}
        \caption{\textbf{Ablation of CMAR's CFG.} First place is \textbf{bolded}.}
        \label{tab:cmar-cfg}
    \end{minipage}
    \begin{minipage}{0.49\textwidth}
        \centering
        \begin{tabular}{c|c|c|c|c|c}
             \toprule
             Order & FID$\downarrow$ & tFID$\downarrow$ & IS$\uparrow$ & Pre.$\uparrow$ & Rec.$\uparrow$\\
             \midrule
             Ascend & 6.005 & 2.267 & 269.27 & 0.782 & 0.556\\
             Descend & \textbf{5.928} & \textbf{2.238} & \textbf{280.55} & \textbf{0.818} & \textbf{0.576}\\
             \bottomrule
        \end{tabular}
        \captionsetup{type=table}
        \caption{\textbf{Ablation of CMAR's synthesis order.} CMAR trained with descending collapse ranks has the best performance.}
        \label{tab:cmar-order}
    \end{minipage}
\end{figure*}

\begin{figure*}
    \centering
    \small
    \begin{tabular}{l|l|l|l|l|l|l|l|l|l}
         \toprule
         Method & T1@0\% & T5@0\% & T1@30\% & T1@30\% & T1@50\% & T5@50\% & T1@78\% & T5@78\% & AuC \\
         \midrule
         ViT & \textbf{82.91} & \textbf{96.28} & 80.03 & 94.80 & 74.38 & 91.48 & 22.38 & 36.76 & 57.16\\
         DynamicViT \cite{DynamicViT} & 81.74 & 95.64 & \textbf{81.44} & \textbf{95.46} & \textbf{77.54} & \textbf{93.30} & 20.66 & 37.09 & 56.32\\
         ViT+C (Ours) & 82.84 & 96.23 & 78.09 & 93.82 & 71.67 & 89.58 & \textbf{31.04} & \textbf{49.45} & \textbf{57.27}\\
         \midrule
         RViT & 83.10 & 96.46 & 81.33 & 95.60 & 78.94 & 94.50 & 67.27 & 87.23 & 70.86\\
         CViT(Ours) & \textbf{83.11} & \textbf{96.50} & \textbf{81.37} & \textbf{95.69} & \textbf{79.39} & \textbf{94.63} & \textbf{70.57} & \textbf{88.94} & \textbf{72.19}\\
         \bottomrule
    \end{tabular}
    \captionsetup{type=table}
    \caption{\textbf{Classification accuracy of ViT under different settings.} First place is \textbf{bolded}. CViT achieves superior classification performance throughout mask rates. ViT+C degrades slightly without masking but outperforms ViT at 78\% masking.}
    \label{tab:vit-comp}
\end{figure*}

\begin{figure}
    \centering
    \includegraphics[width=\linewidth]{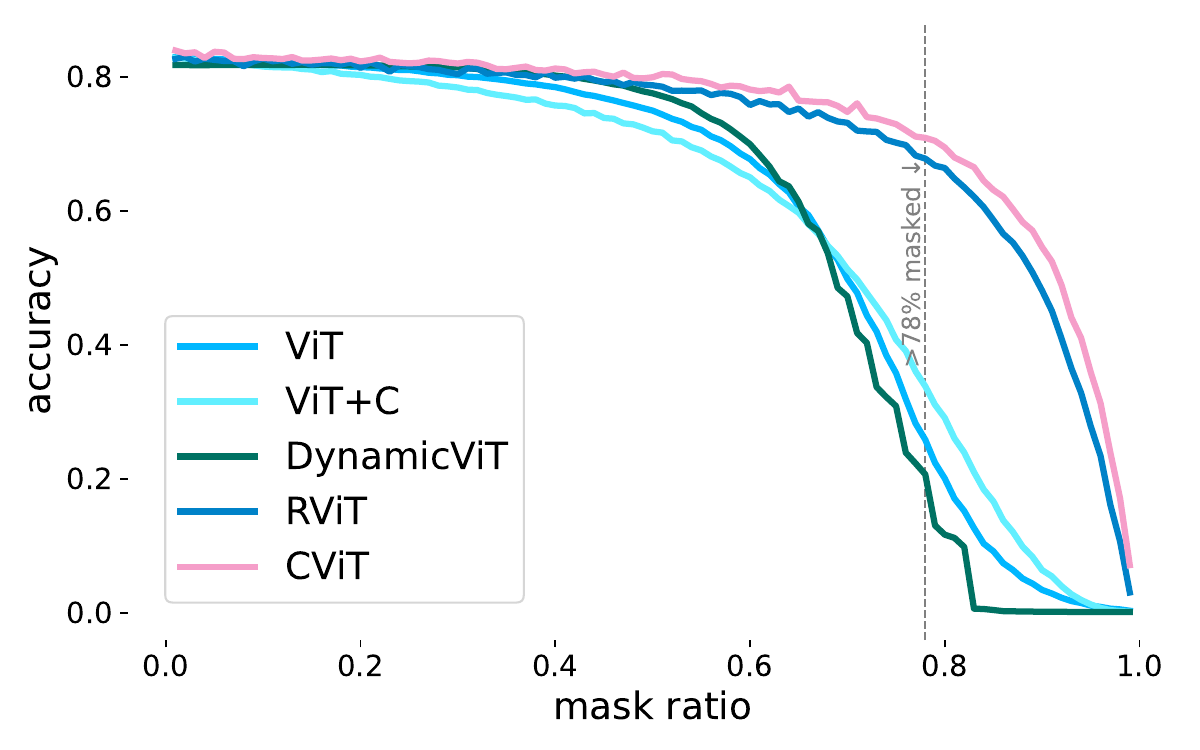}
    \caption{\textbf{ViT accuracy curves.} Our CViT outperforms baselines consistently along different mask ratios.}
    \label{fig:vit-vis}
\end{figure}

\subsection{Benchmarking CMAR}

We compare our collapse order's effect by quantitative metrics in \cref{tab:ar-comp}. MAR denotes the original model with random-order generation, and MAR+C is the same model inferred with collapse order. We measure generation performance by Fr\`{e}chet Inception Distance (tFID) \cite{FID} and Inception Score (IS) \cite{IS} following the original work \cite{mar} on 50,000 augmented training samples. Additionally, we measure the original FID, precision, and recall against the 10,000 samples in standard reference batch \cite{adm}.

Our CMAR achieves a significant $4\%$ gain in tFID despite of a very minor degradation in IS ($0.3\%$). The fine-tuning-less MAR+C achieves the highest IS score and has a tFID behind CMAR. Our two methods slightly degrades in metrics during generation without CFG, possibly due to their higher reliance on the conditioning from class-specific labels and orders which requires a stronger conditioning guidance provided by higher CFG scales.

The qualitative samples of MAR+C and CMAR are shown in \cref{fig:cmar-qual}. It can be observed that CMAR synthesizes slightly more realistic images than MAR+C, while there are significant artifacts in the baseline MAR such as object compositional defects or content confusion.

\subsection{Properties of CMAR}

We conduct an ablation of CFG scales for CMAR in \cref{tab:cmar-cfg}. A lower CFG achieves lower FIDs at the expense of IS, while higher CFG degrades in FIDs with an increase of IS. Therefore, we choose 3.0 as our optimal CFG scale to balance performance, different from the original setting at 2.9. Intuitively, this shift in CFG scale can be explained by CMAR's stronger reliance on conditioning labels in order to address diverse collapse orders in different classes.

Should CMAR follow ascending or descending patch ranks for generation? We train a separate model in each direction in \cref{tab:cmar-order} for ablation. The ascending model performs significantly worse than the descending one. This observation is consistent with the intuition that patches most relied upon by others should be generated first.

\subsection{Benchmarking CViT}

We compare CViT with four baselines under different patch mask ratios in \cref{tab:vit-comp}. The ViT model sees full images during training and is randomly masked during inference. ViT+C is a ViT variant that infers from collapse masks instead of random masking without modifying ViT's training. RViT is trained in the same way as CViT but with random masks. Although DynamicViT \cite{DynamicViT} is a token-pruning method that operates on model instead of data space, we include it to compare the effects of these separate pruning perspectives. Specifically, we employ the DeiT-B variant of DynamicViT which has similar parameter sizes as our ViT. We test these models' performance with top-k classification accuracy and an Area under Curve (AuC) metric that integrates over top-1 accuracies along the $0\sim99\%$ mask rates.

CViT leads in almost all metrics, demonstrating our collapse order's efficiency in capturing salient visual information from sparse high-rank patches. This effect is also observed in the training-less ViT+C, whose accuracy is higher than ViT and DynamicViT when extremely sparse 78\% patches are masked. Our AuC superiority under both scenarios suggests that the overall classification performance benefits from patch collapse modeling.

\subsection{Properties of CViT}
Our CViT maintains high classification accuracy with a significant portion of low-collapse-rank patches masked, as visualized in \cref{fig:vit-vis}. To analyze the accuracy decay, we find the knee of this concave accuracy curve with the Kneedle algorithm \cite{kneedle}. CViT's performance maintains until $78\%$ patches are omitted, at which point its accuracy drops to $70.6\%$. As we apply masking by dropping out patches from the input sequence instead of replacing them with mask tokens, the computational cost is reduced by $95.16\%$ from the $O\left(n^2\right)$ complexity of attention process. Furthermore, CViT also achieves the highest accuracy without masking. This result suggests that our patch collapse modeling can also benefit full-image classification.

It's also worth noticing that RViT, although following random order during training, also experiences a performance gain at lower mask ratios. However, its accuracy is lower than CViT at each level, suggesting our collapse order's superiority over stochastic modeling.

\section{Conclusion}
\label{sec:conclusion}

In this work, we introduce a novel modeling of the local feature uncertainty reduction process in images as patch collapse. By training a Collapse Masked Autoencoder to reconstruct a target patch relying on other tiles, and analyzing its resultant patch dependency graph with PageRank, we are able to identify an optimal ordering of patches during image realization that maximally reduces uncertainty, \textit{i.e.}, the collapse order of patches. Experiments show that respecting this order benefits masked image modeling methods in: (1) autoregressive image generation, where the state-of-the-art model MAR is boosted in FID and IS, and (2) image classification, which leads to a ViT that can maintain high accuracy despite seeing only 22\% image patches. We hope our patch collapse modeling will encourage a new perspective on salient visual structures, benefiting future exploration on efficient and scalable vision methods.

\clearpage
\setcounter{page}{1}
\maketitlesupplementary

\section{Proof of Optimal Collapse Ranking}
\label{sec:proof}

We give a formal proof below to show that computing PageRank scores of the adjacency matrix $\mathbf{A}$ in Section 3.2 from CoMAE gives the optimal ranking $R$ in Eq. (1) that minimizes image uncertainty (cumulative patch entropy).

\begin{definition}[Cumulative conditional entropy]
    Define the cumulative conditional entropy of an ordered prefix $S$ by:
\begin{equation}
H_c(S) := \sum_{n=1}^N H\big(e_n \mid \{e_j : j \in S\}\big),
\end{equation}
where $H$ measures the distribution entropy.
\end{definition}

\begin{assumption}[Linear influence model]
    There exists $\beta \in \mathbb{R}_{\ge 0}^N$ such that the expected marginal reduction in $H_c$ by observing patch $j$ is approximately given by:
    \begin{equation}
        \Delta(j \mid S) \approx \big((I - cP)^{-1} \beta\big)_j.
    \end{equation}
\end{assumption}

\begin{assumption}[Submodularity]
    The entropy $H_c$ is monotone and approximately submodular in $S$.
\end{assumption}

\begin{theorem}[Optimal collapse ranking]
Let $\mathbf{A} \in [0,1]^{N \times N}$ be the learned dependency matrix with $\mathbf{A}_{ij}$ indicating the influence of patch $j$ on patch $i$. Let $P$ be the corresponding column-stochastic matrix, and let $c \in (0,1)$. Ordering patches in descending order of:
\begin{equation}
\label{eq:order}
    r := (I - cP)^{-1}\beta,
\end{equation}
minimizes the linearized proxy of $H_c$ at each prefix. If $\beta$ is constant or interpreted as a personalized teleport vector, this ordering corresponds to PageRank on $P$.
\end{theorem}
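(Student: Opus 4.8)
The plan is to recast the entropy-minimization objective as a gain-maximization problem and reduce it to a sorting argument. First I would note that, since the unconditioned entropy $H_c(\emptyset)$ is fixed, minimizing the cumulative conditional entropy $H_c(S)$ over a prefix $S$ is equivalent to maximizing the accumulated marginal reduction $\sum_{j \in S}\Delta(j\mid\cdot)$. Invoking the linear influence model, each per-patch reduction is approximated by $\Delta(j\mid S)\approx r_j$ with $r=(I-cP)^{-1}\beta$, a quantity independent of the current prefix $S$. This collapses the objective into a modular surrogate $H_c(S)\approx H_c(\emptyset)-\sum_{j\in S}r_j$, which is the linearized proxy named in the statement.

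Next I would establish the ordering by a prefix/rearrangement argument. For any fixed prefix length $k$, minimizing the surrogate amounts to maximizing $\sum_{j\in S}r_j$ subject to $|S|=k$, whose optimum is exactly the $k$ patches with the largest entries of $r$. The crucial point is that these top-$k$ sets are nested across $k$, so a single descending sort of $r$ simultaneously attains the optimal prefix at every length. An exchange argument formalizes this: whenever an ordering places a lower-$r$ patch before a higher-$r$ one, swapping them weakly decreases every affected prefix value of the surrogate, so the sorted order dominates. Here I would use monotonicity and approximate submodularity to argue that this greedy-by-$r$ schedule, exactly optimal for the linearized proxy, stays near-optimal for the true $H_c$ via the classical guarantee for greedy maximization of monotone submodular set functions.

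Finally I would identify $r$ with PageRank. Because $c\in(0,1)$ and $P$ is column-stochastic, the Neumann expansion $(I-cP)^{-1}=\sum_{k\ge 0}c^kP^k$ converges, so $r=\sum_{k\ge 0}c^kP^k\beta$ accumulates multi-hop influence with geometric decay. Comparing with the defining fixed point of PageRank, $\pi=cP\pi+(1-c)v$, which solves to $\pi=(1-c)(I-cP)^{-1}v$, shows $r$ is proportional to $\pi$ whenever $\beta$ acts as the scaled teleport vector $v$; taking $\beta$ constant recovers uniform-teleport PageRank. Ordering patches by descending $r$ is therefore ordering by descending PageRank score.

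The hard part will not be the sorting or the spectral identification, but reconciling the two approximation assumptions: the linear influence model makes marginal gains set-independent (a modular picture), whereas the submodularity assumption posits genuine diminishing returns (hence set dependence). I would make this precise by reading $r_j$ as the first-order, resolvent-propagated base gain and casting submodularity as a bound on the higher-order residual, so that greedy selection by $r$ is exactly optimal for the linearized proxy and, through the submodular guarantee, provably near-optimal for the exact objective. Controlling the gap between the modular surrogate and the true $H_c$ is the delicate step.
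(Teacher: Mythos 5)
Your proposal follows essentially the same route as the paper's proof: linearize the marginal entropy reduction via the influence model so that sorting by $r=(I-cP)^{-1}\beta$ is optimal for the proxy, invoke the submodular greedy guarantee for near-optimality on the true $H_c$, and identify $r$ with (personalized) PageRank through the Neumann series and the fixed point $r=\beta+cPr$. If anything you are more careful than the paper: your exchange argument for prefix optimality at every length, and your explicit flagging of the tension between the set-independent linear influence model and the set-dependent submodularity assumption, are steps the paper's proof silently elides.
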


\begin{proof}
By the Neumann series expansion of \cref{eq:order}:
\begin{equation}
    r = (I - cP)^{-1}\beta = \sum_{t=0}^\infty c^t P^t \beta,
\end{equation}
we derive the following recurrence for $r$:
\begin{equation}
    cPr = cP\sum_{t=0}^\infty c^t P^t \beta 
= \sum_{t=0}^\infty c^{t+1} P^{t+1} \beta
= \sum_{s=1}^\infty c^s P^s \beta,
\end{equation}
where $s = t+1$. Therefore,
\begin{equation}
    \beta + cPr = \beta + \sum_{s=1}^\infty c^s P^s \beta = \sum_{s=0}^\infty c^s P^s \beta = r.
\end{equation}
Thus, $r$ satisfies the fixed-point equation:
\begin{equation}
    r = \beta + cPr.
\end{equation}

Under Assumption of Submodularity, greedy maximization of the marginal gain $\Delta(j \mid S)$ gives a $(1 - 1/e)$-approximation to the minimization of $H_c$ \cite{greedy}. Since the linear model assumes $\Delta(j \mid S) \approx r_j$, selecting nodes in descending order of $r_j$ yields an optimal prefix sequence for the cumulative entropy minimization objective.

Finally, note that the fixed-point equation above matches the PageRank formulation:
\begin{equation}
    r = (1-c)\beta + cPr,
\end{equation}
after normalizing $\beta$ so that it sums to one. Hence, $r$ is precisely the PageRank (or personalized PageRank) vector for $P$, completing the proof.
\end{proof}

\section{Additional Qualitative Results}
\label{sec:add-qual}

\begin{figure*}
    \centering
    \includegraphics[width=\linewidth]{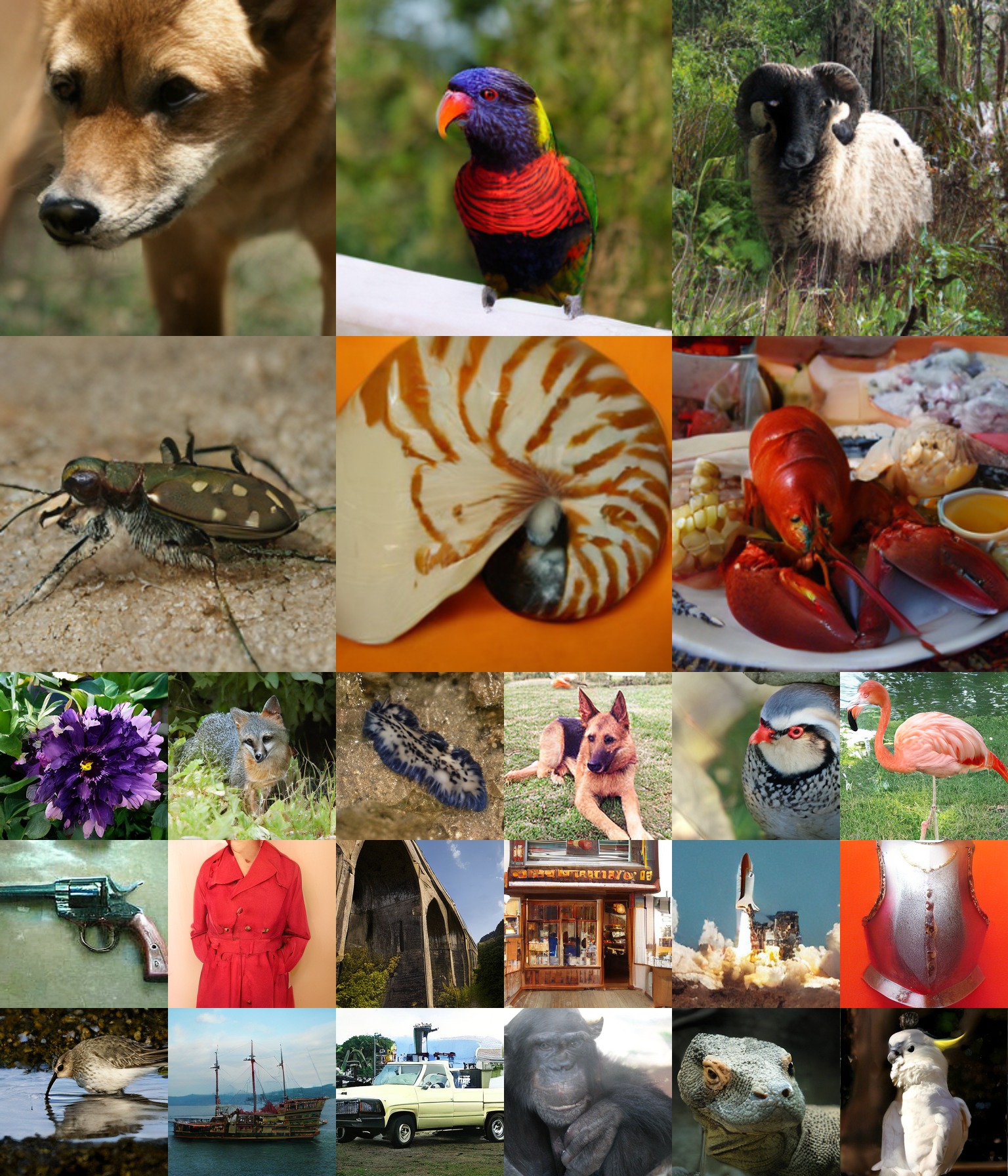}
    \caption{\textbf{Additional qualitative samples of CMAR.} Our method generates high-fidelity images across classes.}
    \label{fig:qual-sup}
    \vspace{12pt}
\end{figure*}

We provide more qualitative samples generated by our CMAR in \cref{fig:qual-sup}. It can be observed that our method synthesizes high-fidelity images across a broad range of classes.

\section{Model Architectures}
\label{sec:arch}

\begin{figure}
    \centering
    \includegraphics[width=\linewidth]{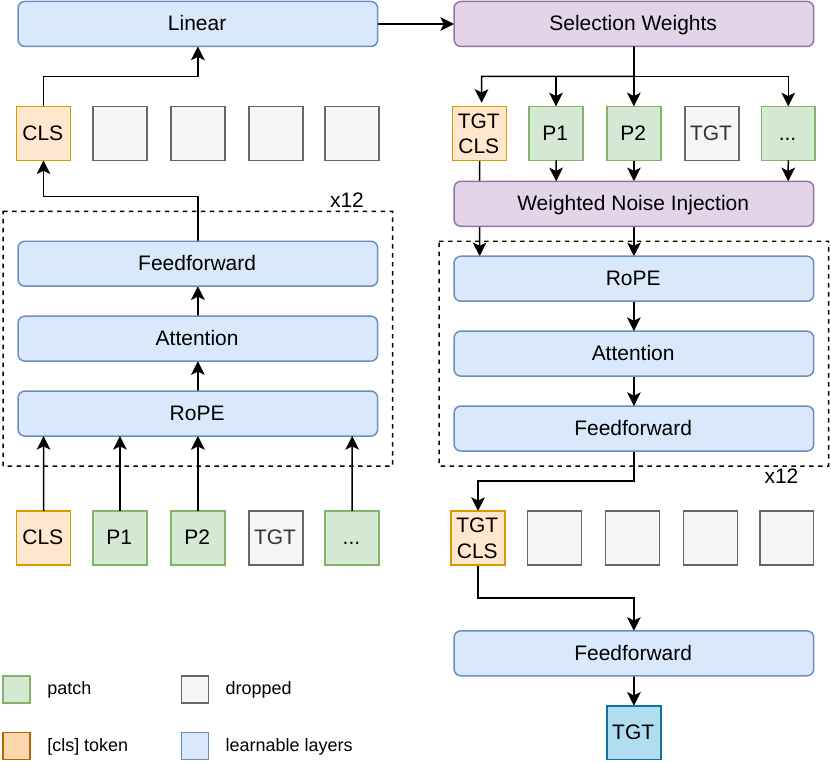}
    \caption{\textbf{Architecture of CoMAE.} Both the encoder and decoder follow ViT to pool information into a [cls] token.}
    \label{fig:arch}
\end{figure}

The architectures of MAR \cite{mar} and Vision Transformer \cite{ViT} are well-documented in their original papers. We elaborate on our novel CoMAE architecture in \cref{fig:arch} below.

Given the patch token sequence, the CoMAE encoder concatenates a learned [cls] token in front of it. The target patch is dropped to avoid information leakage during reconstruction. This sequence is then transformed by Rotary Position Embedding (RoPE) \cite{RoPE} to inform the encoder of each patch's relative image location. Next, we pass the sequence through 12 self-attention blocks with an embedding dimension of 256, retaining the processed [cls] token as output. Finally, a selection weight vector is obtained by passing this [cls] token through a linear projection, followed by a $\tanh$ normalization to keep each entry between $[0, 1]$.

We take the original input patch token sequence (without encoder processing) and inject them with Gaussian noise following Eq. (3) in Section 3.1. Again, the target patch is dropped for reconstruction. To inform the decoder which patch is currently under reconstruction, we append a [tgt cls] token in front of the entire sequence. The decoder then processes this sequence with 12 self-attention blocks. The embedding dimension is set as 64. Finally, we retrieve the first [tgt] token and pass it through a feedforward head to obtain the reconstructed target patch.

\section{Limitations and Future Improvements}
\label{sec:limit}

As our collapse order describes local image units as patches, it's subject to the constraints of this representation: (1) each region has the same fixed size and (2) the shape of each patch doesn't reflect object saliency. In a more generalized setup, the image units can be expressed with salient local features such as segmentation maps or class activation maps. This extra layer of saliency fits more closely with our assumption of image locality during the collapse process. For now, we keep the modeling simple to study the preliminary feasibility of formalizing image collapse and leave these explorations to future works.

Additionally, we were unable to train more variants for CoMAE, CMAR, and CViT in this work due to limited computation. While we show that the current collapse order can already boost MIM methods in image generation and classification, scaling up training should achieve even higher performance gains. For instance, CoMAE can identify collapse orders more accurately with a deeper encoder, and larger MAR models can be converted into CMARs with longer training. More vision tasks can also be tested with collapse order for optimization, such as segmentation and detection. We will conduct these experiments if more computational resources become available in the future.

Finally, an alternative CoMAE design remains unexplored. Instead of training a decoder from scratch, we could directly employ MAR or similar autoregressive image generators for decoding. These pretrained decoders decouple encoder learning from the reconstruction objective, making it more efficient. Furthermore, CoMAE can also be deployed on the representation space of Representation Autoencoders (RAEs) \cite{RAE} to identify the process of \textbf{representation collapse} instead of patch collapse. We deem these directions as having high potentials in improving CoMAE and will study them soon.

\section{Ethical Statement}
\label{sec:ethics}

Our method studies the optimal scanning order of patches in image synthesis and classification. Since the identified high-rank patches in our collapse sequence have shown greater influence on these downstream tasks, visual patch attacks \cite{adpatch} could leverage them for more concentrated prompt engineering. However, the same information can also be adopted by visual models to enhance their robustness on high-rank patches to guard against such attacks. We will release our implementation code and model weights to aid the design of these defenses.

{
    \small
    \bibliographystyle{ieeenat_fullname}
    \bibliography{main}
}


\end{document}